\documentclass[journal]{IEEEtran}
\usepackage{graphicx}
\usepackage{algorithm}
\usepackage{algpseudocode}

\usepackage{xcolor}
\usepackage{amsmath,amsfonts,amsthm,amssymb,amsopn,bm,mathtools,comment}
\newtheorem{proposition}{Proposition}
\newtheorem{theorem}{Theorem}

\usepackage[compress]{cite}
\usepackage{booktabs}
\usepackage{multirow}
\usepackage{multicol}
\usepackage{subcaption}
\usepackage{float}
\usepackage{stfloats}
\usepackage[table,xcdraw]{xcolor}

\usepackage{amsmath,amsfonts,bm}

\def\1{\bm{1}}

\DeclareMathAlphabet{\mathsfit}{\encodingdefault}{\sfdefault}{m}{sl}
\SetMathAlphabet{\mathsfit}{bold}{\encodingdefault}{\sfdefault}{bx}{n}

\usepackage{colortbl}

\usepackage{xcolor,soul,framed} 

\colorlet{shadecolor}{yellow}
\usepackage{mdframed}
\usepackage{mdwmath}
\usepackage{mdwtab}
\usepackage{eqparbox}
\usepackage{url}
\usepackage{xurl}

\usepackage{hyperref} 

\usepackage{tcolorbox}

\usepackage{tabularx}

\usepackage{threeparttable}
\usepackage{array}
\usepackage[normalem]{ulem}
\usepackage{mathtools}
\usepackage{enumitem}
\usepackage{nomencl}
\usepackage{etoolbox}
\makenomenclature
\renewcommand{\nomgroup}[1]{%
  \ifthenelse{\equal{#1}{A}}{\item[\textit{\textbf{Sets}}]}{%
  \ifthenelse{\equal{#1}{B}}{\item\item[\textit{\textbf{Parameters}}]}{%
  \ifthenelse{\equal{#1}{C}}{\item\item[\textit{\textbf{Variables}}]}{}}}
}

\usepackage{amsthm}
\usepackage{tabularx}
\theoremstyle{plain}

\theoremstyle{remark}
\newtheorem{remark}{Remark}

\begin{document}
\title{
Structure-Aware Commitment Reduction for Network-Constrained Unit Commitment with Solver-Preserving Guarantees
}
\author{Guangwen~Wang,~\IEEEmembership{Student~Member,~IEEE;}
Jiaqi~Wu,~\IEEEmembership{Student~Member,~IEEE;}
Yang~Weng,~\IEEEmembership{Senior Member,~IEEE;}
and~Baosen~Zhang,~\IEEEmembership{Member,~IEEE}
\thanks{The authors Guangwen Wang, Jiaqi Wu, and Yang Weng are with the School of Electrical, Computer and Energy Engineering, Arizona State University, Tempe, AZ 
. 
Baosen Zhang is with the Department of Electrical and Computer Engineering, University of Washington, Seattle, WA.}
\vspace{-10mm}
}

\markboth{ IEEE Transactions on Power Systems}{}%

\maketitle

\begin{abstract}

The growing number of individual generating units, hybrid resources, and security constraints has significantly increased the computational burden of network-constrained unit commitment (UC), where most solution time is spent exploring branch-and-bound trees over unit-hour binary variables. To reduce this combinatorial burden, recent approaches have explored learning-based guidance to assist commitment decisions. However, directly using tools such as large language models (LLMs) to predict full commitment schedules is unreliable, as infeasible or inconsistent binary decisions can violate inter-temporal constraints and degrade economic optimality. This paper proposes a solver-compatible dimensionality reduction framework for UC that exploits structural regularities in commitment decisions. Instead of generating complete schedules, the framework identifies a sparse subset of structurally stable commitment binaries to fix prior to optimization. One implementation uses an LLM to select these variables. The LLM does not replace the optimization process but provides partial variable restriction, while all constraints and remaining decisions are handled by the original MILP solver, which continues to enforce network, ramping, reserve, and security constraints. We formally show that the masked problem defines a reduced feasible region of the original UC model, thereby preserving feasibility and enabling solver-certified optimality within the restricted space. Experiments on IEEE 57-bus, RTS 73-bus, IEEE 118-bus, and augmented large-scale cases, including security-constrained variants, demonstrate consistent reductions in branch-and-bound nodes and solution time, achieving order-of-magnitude speedups on high-complexity instances while maintaining near-optimal objective values.

\end{abstract}

\section*{Nomenclature}
\begin{description}[leftmargin=!,labelwidth=1.6cm,labelsep=0.4cm,align=left]

\item[\textit{\textbf{Sets}}]
\item[$G$] Set of generators, indexed by $g$.
\item[$N$] Set of buses, indexed by $n$.
\item[$g(n)$] Set of generators located at bus $n$.
\item[$K$] Set of transmission lines, indexed by $k$.
\item[$T$] Set of time periods, indexed by $t$.
\item[$\mathcal{F}$] Set of generator--time indices $(g,t)$ whose commitment variables are fixed prior to optimization.
\item[$\mathcal{X}$] Feasible region of the original UC formulation.
\item[$\mathcal{X}_F$] Feasible region after commitment variable restriction.

\item[\textit{\textbf{Parameters}}]
\item[$c_g$] Production cost coefficient of generator $g$.
\item[$c_g^{NL}$] No-load cost of generator $g$.
\item[$c_g^{SU}$] Startup cost of generator $g$.
\item[$P_g^{\min}$] Minimum operating capacity of generator $g$.
\item[$P_g^{\max}$] Maximum operating capacity of generator $g$.
\item[$UT_g$] Minimum up time of generator $g$.
\item[$DT_g$] Minimum down time of generator $g$.
\item[$P_k^{\max}$] Thermal limit of transmission line $k$.
\item[$d_{nt}$] Demand at bus $n$ at time $t$.
\item[$R_g^{HR}$] Ramp-rate limit of generator $g$.
\item[$R_g^{SU}$] Startup ramp limit of generator $g$.
\item[$R_g^{SD}$] Shutdown ramp limit of generator $g$.
\item[$u_{g0}$] Initial commitment state of generator $g$.
\item[$P_{g0}$] Initial dispatch level of generator $g$.
\item[$PTDF_{k,n}$] Power transfer distribution factor from bus $n$ to line $k$.

\item[\textit{\textbf{Variables}}]
\item[$P_{gt}$] Real-power output of generator $g$ at time $t$.
\item[$u_{gt}$] Commitment status of generator $g$ at time $t$.
\item[$v_{gt}$] Startup indicator of generator $g$ at time $t$.
\item[$w_{gt}$] Shutdown indicator of generator $g$ at time $t$.
\item[$f_{kt}$] Line flow on transmission line $k$ at time $t$.
\item[$\bar{u}_{gt}$] Fixed commitment value imposed on selected generator--time pairs.

\end{description}

\section{Introduction}
The unit commitment (UC) problem is a fundamental part of power system operations~\cite{padhy2004unit,knueven2020mixed}. It determines the on and off status and dispatch of generating units over a planning horizon while satisfying demand balance, reserve requirements, capacity limits, ramping constraints, and minimum up and down times \cite{hedman2010co}. If DC power flow equations are used, the UC problem becomes a mixed-integer linear program~\cite{morales2013tight}. Modern UC instances are also fairly large, often involve hundreds of generators, detailed network constraints, and multi-period horizons~\cite{knueven2020mixed,wang2025constrained}. Therefore, the resulting problems can contain tens of thousands of binary commitment variables, making it nontrivial to solve even with state-of-the-art computing resources~\cite{pandvzic2013comparison}. 

At the same time, the sizes of the models are increasing rapidly because of the need to integrate variable large loads, storage, renewable energy, and hybrid resources. The increase in complexity further enlarges the number of unit-hour binaries and strengthens temporal coupling among them \cite{Oikonomou2024WECCBaseline, Rand_QueuedUp_2024}. Consequently, the discrete search space grows rapidly with system scale, making the problem still more challenging to solve. 

To cope with the growth in problem complexity, classical acceleration techniques have focused on tightening formulations, improving warm starts, and leveraging decomposition strategies to guide or prune the MILP search \cite{chen2023security}. These approaches improve solver efficiency and reduce per-node overhead, but do not alter the number of free binary commitment variables that define the size of the discrete search space \cite{ morales2013tight}. As system models become larger and more granular, the branch-and-bound process remains the main computational bottleneck.

More recently, learning-based approaches have been proposed to guide commitment decisions and accelerate the MILP solution process. Some methods predict unit-hour commitment patterns from forecasts and covariates, using supervised models to generate warm starts or partially fixed commitment variables \cite{pineda2022learning, zhang2025learning}. Others integrate learned signals into the MILP solver, such as through branching guidance or selective binary assignments that aim to reduce branch-and-bound exploration \cite{zhuozan2024combining,lawless2025llmsjournal}. In addition, learning has been combined with decomposition-based UC formulations to estimate subproblem quality and prioritize promising substructures. 

Although these techniques lead to speedups in stable operating regimes, they rely heavily on historical labeled data and often attempt to approximate or replace part of the discrete search \cite{11225172}. But generating a good set of training data can be prohibitively expensive, given the need to solve a large number of MILP problems. More importantly, when system conditions shift or new resources are introduced, learned commitment patterns can lose accuracy and produce infeasible or inconsistent decisions \cite{wu2025unified}. The solver must then override or correct this guidance, often requiring additional MILP solves that reduce the net runtime gains. Structurally, these approaches do not reduce the number of free commitment binaries in a way that preserves solver-certified feasibility and optimality. Instead, they rely on prediction in a problem where exact constraint enforcement and discrete consistency are essential. As we can see, these approaches do not explicitly exploit structural regularities in commitment decisions to reduce the dimensionality of the binary search space.

To address this structural limitation, we propose a structure-aware commitment reduction framework that exploits the observation that many unit-hour commitment decisions remain stable across similar operating conditions. Rather than relying on prediction to generate full schedules, the method identifies a sparse subset of structurally stable commitment variables that can be fixed prior to optimization. One practical implementation of this idea uses a large language model (LLM) to select these variables. 
This design reduces the number of free commitment binaries while leaving the original UC formulation unchanged. The MILP solver then determines the remaining commitment and dispatch decisions and enforces all operational and network constraints exactly. By reducing the number of free commitment binaries while retaining solver authority, the method directly reduces branch-and-bound exploration without compromising feasibility. We show that even by fixing a small number of commitment variables, we can drastically reduce the solving time. We formally prove that the modified problem defines a subset of the original UC feasible region, ensuring that feasibility is preserved by the solver.

This restriction-based framework operates entirely within the original UC formulation and does not alter the underlying optimization model. In addition to the commitment fixing strategy, we introduce a pre-solve feasibility check that verifies necessary logical consistency conditions before invoking the MILP solver. This step avoids launching solves that are trivially infeasible and reduces redundant solver calls under strict market timelines. The number of fixed commitment binaries is explicitly limited to avoid excessive restriction of the feasible region and to control objective deviation. We formally show that the resulting problem defines a subset of the original UC feasible region. Therefore, feasibility is preserved by the solver and optimality is certified within the reduced commitment space. Together, these design choices reduce branch-and-bound exploration while preserving exact constraint enforcement, and they remain effective even when historical commitment data are limited or system configurations change.

The proposed framework is evaluated on the IEEE 57-bus, RTS 73-bus, and IEEE 118-bus systems, along with augmented 118-bus cases with additional generators, to examine scalability under increasing commitment dimensionality. Experiments include a setting without access to historical commitment schedules, load uncertainty perturbations, and a year-long sequence of daily UC instances. Baselines include the original MILP solve and data-driven commitment fixing strategies based on clustering and nearest-neighbor matching, Performance is evaluated using solve time, branch-and-bound nodes, simplex iterations, and objective deviation relative to the MILP reference, with feasibility preserved by the unchanged UC formulation. Results show consistent reductions in explored nodes and wall-clock time across system sizes, achieving substantial speedups while maintaining near-optimal objective values. Results show consistent reductions in branch-and-bound nodes and solve time across system sizes. The proposed method achieves up to $6.7$x speedup (e.g., on the RTS $73$-bus system) with less than $0.3\%$ objective deviation from the MILP baseline.
These improvements remain even when no historical commitment schedules are provided. This demonstrating effectiveness under limited training data and changing system conditions. 

We note that there is an orthogonal line of work that uses LLM to translate natural language description to optimization problems~\cite{ahmed2024lm4opt,cai2025mingyue,zhang2025poweragent,muhao2025solar,jimaging12030106}. In contrast, our paper is on solving the underlying optimization problem using LLMs and our results can be integrated into these or more conventional solvers. 

The rest of this paper is organized as follows. Section \ref{sec:problem_formulation} presents the problem formulation for classic UC and LLM-Assisted UC problem formulation. Section \ref{sec:restricted_uc} details task decomposition, context engineering and prompt design. Section \ref{sec:llm-theory} proves some analytical insights into our proposed framework. Section \ref{sec:experiments} provides numerical validation of the proposed approaches. Conclusions and discussions are presented in Section \ref{sec:conclusion}.

\section{Problem Formulation}
\label{sec:problem_formulation}

We consider a network-constrained unit commitment (UC) problem over a discrete time horizon $T$. Let $G$ denote the set of generators and $N$ the set of buses. The objective is to determine the binary commitment decisions and continuous dispatch levels that minimize total production cost while satisfying operational and transmission constraints under a DC power flow approximation. Let $u_{gt} \in \{0,1\}$ denote the on/off commitment status of generator $g$ at time $t$, $P_{gt}$ its real-power output, and $v_{gt}$ and $w_{gt}$ the startup and shutdown indicators, respectively. The network-constrained UC formulation is given by

\begin{subequations}\label{eq:uc}
\begin{align}
\min \quad
& \sum_{t \in T} \sum_{g \in G}
\left( c_g P_{gt} + c_g^{NL} u_{gt} + c_g^{SU} v_{gt} \right)
\label{eq:uc_obj}
\\
\text{s.t.} \quad
& P_g^{\min} u_{gt} \le P_{gt} \le P_g^{\max} u_{gt},
&& \forall g,t 
\label{eq:uc_cap}
\\
& u_{gt} - u_{g,t-1} = v_{gt} - w_{gt},
&& \forall g,t 
\label{eq:uc_logic}
\\
& \sum_{s=t-UT_g+1}^{t} v_{gs} \le u_{gt},
&&\hspace{-6.3em} \forall g,\ t = UT_g,\ldots,T
\\
& \sum_{s=t-DT_g+1}^{t} w_{gs} \le 1 - u_{gt},
&& \notag \\
& &&\hspace{-6.3em} \forall g,\ t = DT_g,\ldots,T
\label{eq:uc_DT}
\\
& P_{gt} - P_{g,t-1}
\le R_g^{HR} u_{g,t-1} + R_g^{SU} v_{gt},
&& \forall g,t
\label{eq:uc_ramp_up}
\\
& P_{g,t-1} - P_{gt}
\le R_g^{HR} u_{gt} + R_g^{SD} w_{gt},
&& \forall g,t
\label{eq:uc_ramp_down}
\\
& \sum_{g \in g(n)} P_{gt} - d_{nt}
= \sum_{k \in K} a_{n,k} f_{kt},
&& \forall n,t
\label{eq:uc_balance}
\\
& f_{kt} = b_k (\theta_{i_k,t} - \theta_{j_k,t}),
&& \forall k,t
\label{eq:uc_flow}
\\
& -P_k^{\max} \le f_{kt} \le P_k^{\max},
&& \forall k,t
\label{eq:uc_line_limit}
\\
& u_{gt}, v_{gt}, w_{gt} \in \{0,1\},
&& \forall g,t.
\label{eq:uc_domain}
\end{align}
\end{subequations}

In \eqref{eq:uc}, the objective \eqref{eq:uc_obj} minimizes total generation, no-load, and startup costs. Constraints \eqref{eq:uc_cap}–\eqref{eq:uc_ramp_down} enforce generator capacity, logical consistency, minimum up/down time, and ramping limits. Constraints \eqref{eq:uc_balance}–\eqref{eq:uc_line_limit} enforce nodal power balance and transmission security under a DC power flow model. One bus is selected as the reference angle.

\medskip
\noindent
\textbf{Problem Statement:} Given system parameters and demand profiles, solve the network-constrained UC problem \eqref{eq:uc} to obtain a globally optimal commitment and dispatch schedule under the mixed-integer formulation.

\section{Commitment-Restricted Unit Commitment Framework}
\label{sec:restricted_uc}

Solving \eqref{eq:uc} directly as a single MILP can be computationally demanding because branch-and-bound exploration over unit-hour commitment variables dominates runtime in large systems. To reduce the discrete search dimension while preserving the original UC structure, we introduce a commitment variable restriction framework.

\subsection{Restricted Unit Commitment Formulation}

A large language model (LLM) selects a subset of generator–time indices
$
\mathcal{F} \subseteq G \times T
$
and corresponding fixed values $\bar{u}_{gt} \in \{0,1\}$ for $(g,t) \in \mathcal{F}$. 
The restriction is imposed directly in the UC formulation through
\begin{align}
u_{gt} = \bar{u}_{gt},
\quad \forall (g,t) \in \mathcal{F}.
\label{eq:commitment_fixing}
\end{align}

All remaining decision variables remain free and are optimized by the MILP solver subject to the full set of constraints in \eqref{eq:uc}. 
Before incorporating the selected restriction set $\mathcal{F}$, a pre-solve feasibility verification step checks necessary logical consistency conditions, including ramping feasibility and basic supply–demand consistency. The goal is to prevent restrictions that would make the UC problem infeasible. 
Let $\mathcal{X}$ denote the feasible region of the original UC formulation and $\mathcal{X}_F$ the feasible region after imposing \eqref{eq:commitment_fixing}. By construction, $\mathcal{X}_F \subseteq \mathcal{X}$. Therefore, any solution returned by the solver remains feasible for the original UC model, and global optimality is certified within the reduced feasible region $\mathcal{X}_F$.

\begin{figure}[h!]
  \centering
  \includegraphics[width=\columnwidth]{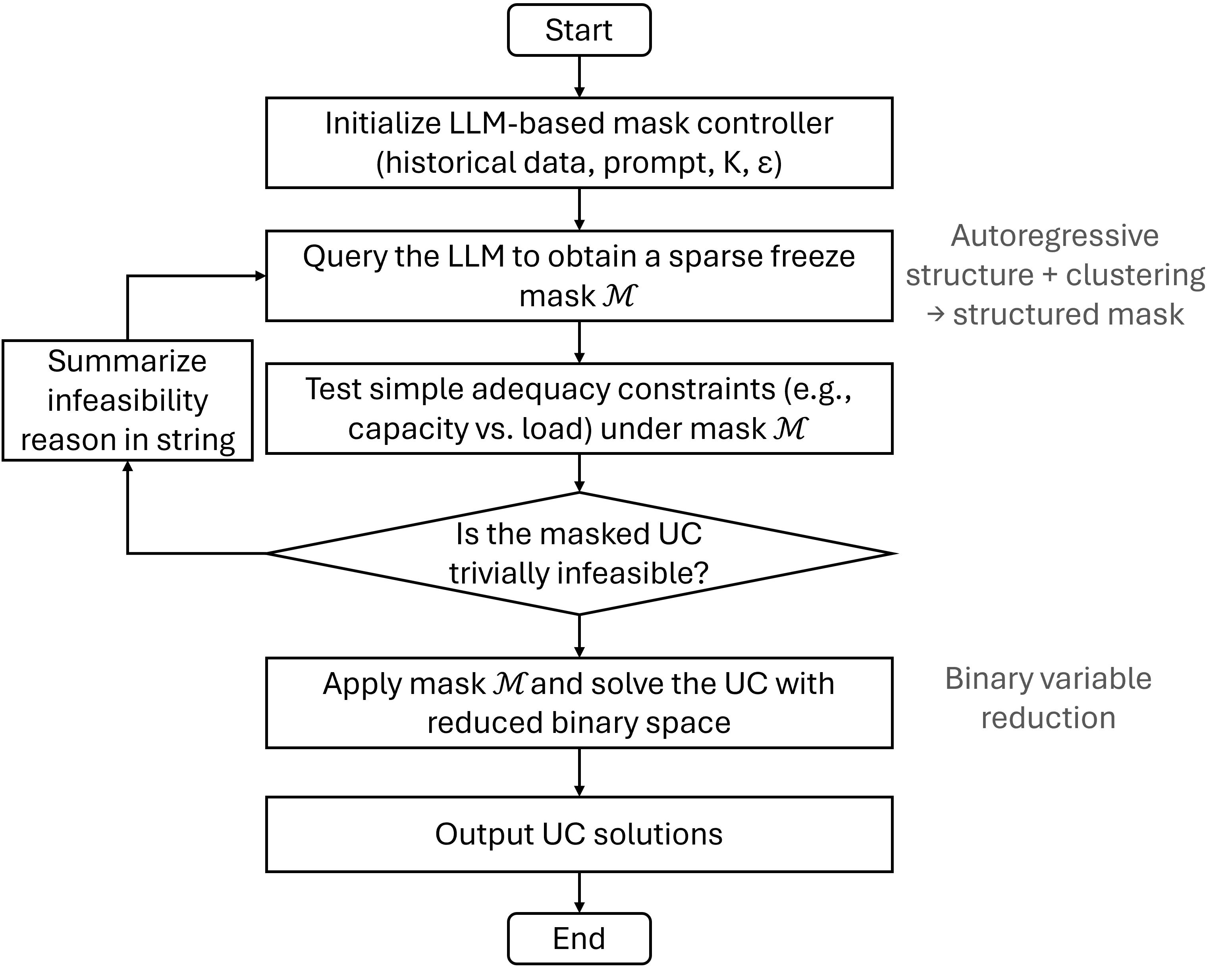}
  \caption{Workflow of the proposed solver-preserving commitment variable reduction method. An LLM generates a sparse commitment restriction set, a feasibility screening step removes inconsistent selections, and the resulting reduced UC problem is solved by a MILP solver with full constraint enforcement and certified optimality within the restricted space.}
  \label{fig:llm-uc-flow}
  \vspace{-4mm}
\end{figure}

\subsection{LLM-Based Restriction Generation}

Our solver-coupled design employs an LLM to generate a sparse freeze mask that fixes only a small fraction of binary on/off decisions, thereby reducing the combinatorial search space for MILP solvers. By coupling high-level guidance from the LLM with exact optimization, the method preserves feasibility and solution quality while accelerating solution time. 
When historical UC solutions are available, we select the $H$ most load-similar past days based on the 24-hour demand profile. 
For each selected day, we provide its 24-hour demand vector and the corresponding MILP commitment schedule as a compact reference. 
This guides the LLM to identify commitment decisions that remain consistent across similar operating conditions, rather than generating a full schedule.
In addition, generator characteristics (e.g., $P_{\min}$/$P_{\max}$, minimum up/down times, and initial on/off status) are provided once as a JSON object, while the target-day 24-hour nodal demands are supplied as a simple array. 
No network, ramping, or reserve parameters are exposed to the model, and these elements remain strictly within the solver.

We decompose the mask-generation prompt into distinct parts, including task description, input context, output specification, and mask rules, to ensure clarity and consistency. The model is explicitly instructed not to produce a full schedule, but to return a strict JSON-formatted array of tuples $(t,g,u)$ indicating that unit~$g$ should be fixed to state $u\in\{0,1\}$ at hour~$t$. A per-hour limit $K$ is imposed on the number of commitment variables that may be fixed, preventing excessive restriction of the feasible region. The LLM is configured to produce consistent outputs across repeated runs. When uncertainty arises, the model is instructed to restrict fewer units and defer the remaining commitment decisions to the MILP solver. The prompt below illustrates this structure.

\begin{tcolorbox}[colback=gray!5!white,colframe=gray!75!black,title=\footnotesize Commitment Restriction Specification Provided to the LLM]
\footnotesize\ttfamily
\textbf{Task:} Identify a sparse set of unit--hour commitment variables to fix in a 24-hour UC problem. 
Do not generate a complete schedule. Only specify selected commitment decisions; 
all remaining decisions are determined by the MILP solver.

\

\textbf{Input Data:}
\begin{itemize}
  \item Generator parameters for each unit $g$: 
        $P_g^{\min}$, $P_g^{\max}$, minimum up/down times, and initial status.
  \item Hourly demand profile for the 24-hour horizon.
\end{itemize}

\

\textbf{Output Format:}
Return a structured list of tuples $(t,g,u)$ indicating that 
unit $g$ is fixed to commitment state $u \in \{0,1\}$ at hour $t$. 
No explanatory text should be included.

\

\textbf{Restriction Guidelines:}
\begin{enumerate}
  \item At each hour $t$, fix at most $K$ units to avoid excessive feasible-region reduction.
  \item Favor commitment patterns consistent with minimum up/down requirements and initial conditions.
  \item When uncertain, restrict fewer units and allow the solver to determine the remaining commitments.
\end{enumerate}
\end{tcolorbox}

The structured specification above ensures that the model output is well-defined, reproducible, and directly compatible with existing UC implementations. By separating task description, input data, and restriction guidelines, the interface promotes consistent decision patterns across operating scenarios.
For each daily UC instance, the model receives the generator parameters and the 24-hour demand profile, and returns a structured list of unit--hour commitment restrictions. Because the output follows a predefined schema, the selected $(t,g,u)$ tuples can be incorporated directly into the UC model without additional interpretation, enabling seamless integration with standard MILP-based workflows.

\subsection{Applying Restrictions and Solving the MILP}

Once a freeze mask $\mathcal{M}=[(t,g,u)]$ is returned, it is injected into the UC model. For each tuple $(t,g,u)$ in the mask, the corresponding variable is set to $u$ and fixed. All remaining operational and network constraints are enforced by the MILP solver, subject to the fixed commitment decisions.

\begin{tcolorbox}[colback=gray!5!white,colframe=gray!75!black,
title=\footnotesize Reduced UC Solution Procedure]
\footnotesize\ttfamily
\textbf{1) Data initialization:} Load network topology, transmission limits, 
generator parameters, and the time-indexed demand profile. \\

\textbf{2) Scenario update:} Set the hourly demand values for the current UC instance. \\

\textbf{3) Commitment reset:} Release previously fixed commitment variables to ensure 
a clean model state. \\

\textbf{4) Commitment restriction:} For each $(t,g,u)\in\mathcal{M}$, 
fix the corresponding unit--hour commitment variable in the UC formulation. \\

\textbf{5) MILP optimization:} Solve the resulting reduced UC MILP with full 
enforcement of operational, inter-temporal, and transmission constraints. \\

\textbf{6) Result extraction:} Obtain the commitment schedule, dispatch levels, 
and objective value for the restricted problem.
\end{tcolorbox}

\subsection{Solution Procedure with Feedback}

As summarized in Algorithm~\ref{alg:uc-feedback}, the procedure takes as inputs the generator parameters, hourly demand profile, restriction limit $K$, and optional screening thresholds, and outputs the final commitment and dispatch schedules. An optional pre-solve verification step is used to detect restriction sets that would render the UC problem infeasible. A post-solve validation step may be applied to confirm that operational and 
network constraints remain satisfied under the restricted solution. Together, these steps provide a structured integration of commitment restriction within a standard MILP-based UC solution process.

\begin{algorithm}[h!]
\caption{LLM-Assisted UC with masking and feedback}
\label{alg:uc-feedback}
\begin{algorithmic}[1]
  \Require Generator metadata $\mathcal{G}$, nodal demands $\{P_d(t)\}_{t=1}^{24}$, freeze cap $K$, optional screening thresholds
  \Ensure Commitment schedule $\{U_{t,g}\}$ and dispatch $\{P_{g, t}\}$
  \State Build the UC MILP model with network and operational constraints
  \State Override bus loads to match the current scenario
  \State Query the LLM with the structured prompt to obtain a sparse mask $\mathcal{M}$
  \If{pre-solver screening is enabled}
    \State Check whether the unfrozen generators can meet demand at each hour; if not, request a revised mask
  \EndIf
  \State \textbf{unfix} all commitments $U_{t,g}$; \textbf{fix} $U_{t,g}\leftarrow u$ for each $(t,g,u)\in\mathcal{M}$
  \State \textbf{solve} the UC MILP; record the resulting commitments and dispatch
  \If{post-solver validation is enabled}
    \State Fix $U_{t,g}$ and run a fast economic dispatch or DC/AC flow to confirm reserve and line limits
    \If{violations occur}
      \State Summarise the binding constraints and request a revised mask
    \EndIf
  \EndIf
  \State \Return $\{U_{t,g}\}$, $\{P_{g, t}\}$, objective value, and any validation logs
\end{algorithmic}
\end{algorithm}
\vspace{-2mm}

\section{Structural Basis for Commitment Restriction in Unit Commitment}
\label{sec:llm-theory}

This section provides a structural explanation for why data-driven commitment restriction can be effective in network-constrained unit commitment (UC) problems. Rather than viewing the LLM as a predictive optimizer, we interpret its role as identifying commitment variables that are structurally stable under recurring operating regimes. The solver remains responsible for enforcing all operational and network constraints, while the restriction mechanism seeks to reduce the dimensionality of the discrete search space by fixing commitment decisions that are unlikely to change across feasible and near-optimal solutions.

In practical power system operations, commitment schedules are not arbitrary. They are strongly influenced by load trajectories, generator characteristics, minimum up/down requirements, and ramping capabilities. Across days with similar demand patterns and resource availability, many commitment decisions exhibit regular and repeatable structure. For example, large base-load units are typically committed during sustained high-demand periods, while peaking units are activated during predictable load ramps. Similarly, units with long minimum up times, once committed, are typically required to remain online for multiple consecutive hours, resulting in contiguous commitment blocks in the schedule.
The proposed restriction framework leverages this structural regularity. Instead of predicting the full commitment schedule, the LLM identifies a sparse subset of unit–hour commitment variables whose values are frequently stable under similar operating conditions. These variables are then fixed prior to optimization, thereby reducing the number of binary decisions explored in branch-and-bound, while leaving the remainder of the UC problem unchanged.

\subsection{Operating Regimes and Commitment Regularity}

Daily load trajectories define recurring operating regimes, including low-load overnight hours, morning ramping periods, daytime high-demand intervals, and evening peak conditions. Although the exact dispatch levels and marginal costs may vary across days, the qualitative structure of commitment decisions often remains similar under similar operating conditions. 
Let $x$ denote the input consisting of generator metadata and a 24-hour load profile. The restriction mechanism can be interpreted as mapping $x$ to a sparse commitment restriction set $\mathcal{M}(x)$. When two input scenarios $x_1$ and $x_2$ correspond to similar operating conditions, such as comparable peak demand levels and ramping characteristics, their resulting commitment patterns often share common structural elements. In particular, certain commitment variables may consistently take the same values across feasible and near-optimal solutions under those conditions. This motivates the approximate relation
\begin{align}
    \mathcal{M}(x_1) \approx \mathcal{M}(x_2),
\end{align}
when $x_1$ and $x_2$ represent similar operating regimes.

Notably, this mechanism does not imply that the full commitment schedule is uniquely determined by the demand profile. In network-constrained UC, multiple feasible commitment combinations may satisfy the same load conditions, particularly for marginal and fast-start units. These units often provide flexibility and can switch status across near-optimal solutions. In contrast, commitment decisions for large base-load generators or for hours with tight capacity margins tend to be more constrained and therefore exhibit limited variability across feasible solutions. By fixing commitment variables that consistently take the same value under similar loading conditions, the proposed framework reduces the effective number of binary variables explored in branch-and-bound, while leaving economically sensitive decisions to the MILP solver.

\subsection{Temporal Coupling in Unit Commitment}

Unit commitment decisions are inherently inter-temporal due to minimum up-time, minimum down-time, and ramping constraints. The commitment variable $u_{g,t}$ is not independent across hours. Once a unit is started, minimum up-time constraints require it to remain online for a prescribed number of subsequent periods. Similarly, shutdown decisions impose minimum down-time restrictions, and ramp-rate limits constrain feasible transitions between consecutive dispatch levels. 
As a result, feasible commitment schedules exhibit contiguous commitment intervals rather than isolated hour-by-hour switching. For generators with long minimum up-time requirements or limited ramping flexibility, commitment decisions over several consecutive hours are strongly coupled. In many practical instances, once such a unit is committed during a high-demand interval, its status over the next several hours is effectively constrained by operational limits.

This temporal coupling provides a structural opportunity for binary reduction. If a generator is consistently committed over a sustained interval under similar loading conditions, fixing the corresponding sequence of unit-hour variables can reduce the depth and width of branch-and-bound exploration. Notably, the proposed approach restricts only a subset of commitment variables, leaving flexible and economically marginal decisions to the MILP solver. 
To avoid infeasible reductions, a pre-solve screening step verifies basic consistency conditions before fixing commitments. These checks include aggregate capacity sufficiency and ramping feasibility under the proposed fixed decisions. The full UC formulation, including network and operational constraints, is then enforced by the MILP solver without relaxation. The number of restricted variables per hour is explicitly capped to limit feasible-region reduction and to control potential objective deviation.

\subsection{Feasibility and Optimality Properties}
\label{subsec:feasibility_properties}

We now formalize the structural properties of the proposed restriction mechanism relative to the original network-constrained UC formulation. Let $\mathcal{F}$ denote the feasible region of the original UC problem defined by constraints (1b)–(1j). Let $M \subseteq \{(g,t)\}$ denote the set of restricted unit-hour pairs, and define the restricted feasible region
\[
\mathcal{F}_M := \{ (u,P,v,w,f,\theta) \in \mathcal{F} \;|\; u_{g,t} = \bar{u}_{g,t}, \ \forall (g,t) \in M \},
\]
where $\bar{u}_{g,t} \in \{0,1\}$ are the fixed commitment values.

\begin{proposition}[Feasibility Preservation]
If $\mathcal{F}_M$ is nonempty, then any solution returned by the MILP solver for the restricted problem satisfies all constraints of the original UC formulation.
\end{proposition}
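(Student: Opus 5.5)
The argument is essentially set inclusion, so the plan is to make that inclusion explicit and then say precisely what ``solution returned by the MILP solver'' means.

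First, I would write the restricted problem as the original model \eqref{eq:uc} together with the fixing constraints \eqref{eq:commitment_fixing} for $(g,t)\in M$. No constraint of \eqref{eq:uc} is removed, relaxed, or changed, and the objective is unchanged.

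By definition, $\mathcal{F}_M$ consists of exactly those points of $\mathcal{F}$ that also satisfy the equalities $u_{g,t}=\bar u_{g,t}$. It follows immediately that $\mathcal{F}_M\subseteq\mathcal{F}$: any $(u,P,v,w,f,\theta)\in\mathcal{F}_M$ satisfies (1b)--(1j) because it lies in $\mathcal{F}$.

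One point I would check explicitly is that fixing $u_{g,t}$ to a binary value $\bar u_{g,t}\in\{0,1\}$ is consistent with the integrality requirement \eqref{eq:uc_domain}. This makes the fixing an intersection with a hyperplane inside the same domain, rather than a change of variable type.

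Next, I would formalize the solver's output. Under the hypothesis $\mathcal{F}_M\neq\emptyset$, the restricted MILP is feasible, so the solver does not report infeasibility. A correct MILP solver returns only points satisfying all of its model's constraints: the incumbent, or the optimum certified by branch-and-bound. Such a point therefore lies in $\mathcal{F}_M$, and hence in $\mathcal{F}$.

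The step I expect to need the most care is this formalization, because it depends on an assumption about the solver. I would state explicitly that the proposition is proved under the standard premise that any returned point is feasible for the model passed to the solver, up to the solver's numerical feasibility tolerances. I would also note that feasibility of each returned solution is guaranteed whether the solve terminated at optimality or by time limit. With that assumption stated, the conclusion follows directly from the inclusion.

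Finally, I would add a remark on optimality. Because $\mathcal{F}_M\subseteq\mathcal{F}$, the restricted optimal value $z_M^\star$ satisfies $z_M^\star\ge z^\star$, where $z^\star$ is the optimal value of the original problem. Equality holds whenever some optimal solution of the original problem agrees with $\bar u$ on $M$. This connects the proposition to the ``certified optimality within the restricted space'' claim.
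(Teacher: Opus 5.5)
Your proposal is correct and matches the paper's proof: the restricted problem is the original MILP with the added equalities $u_{g,t}=\bar u_{g,t}$, so $\mathcal{F}_M\subseteq\mathcal{F}$, and any point the solver returns satisfies all original UC constraints. Your explicit assumption that the solver returns only points feasible for the model it is given (up to numerical tolerances) states what the paper's one-line argument leaves implicit.
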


\begin{proof}
The restricted UC problem is obtained by adding equality constraints $u_{g,t}=\bar{u}_{g,t}$ for $(g,t)\in M$ to the original MILP. All other constraints remain unchanged. Therefore, any feasible solution of the restricted problem satisfies the full set of original UC constraints by construction.
\end{proof}

\begin{theorem}[Reduced Feasible Region]
The restricted feasible region satisfies
\(
\mathcal{F}_M \subseteq \mathcal{F}.
\)
Consequently,
\[
\min_{x \in \mathcal{F}} f(x) 
\;\le\;
\min_{x \in \mathcal{F}_M} f(x),
\]
where $f(x)$ denotes the UC objective.
\end{theorem}

\begin{proof}
By definition, $\mathcal{F}_M$ is formed by intersecting $\mathcal{F}$ with additional linear equality constraints. Hence $\mathcal{F}_M \subseteq \mathcal{F}$. The objective inequality follows directly from minimization over a subset.
\end{proof}

\begin{remark}[Optimality in Restricted Space]
If $x_M^\star$ is the optimal solution of the restricted UC problem, then $x_M^\star$ is globally optimal over $\mathcal{F}_M$. The MILP solver therefore certifies optimality within the reduced commitment space, although global optimality over $\mathcal{F}$ is not guaranteed in general. However, as we will show in Sec.~\ref{sec:experiments}, the optimality loss is minuscule, while the solving efficiency gain is significant. 
\end{remark}

These results clarify that the proposed framework preserves the original UC constraints and exact optimization within the restricted subspace. Any deviation in objective value arises solely from feasible-region reduction through selective commitment restriction, rather than from approximation of the underlying optimization model.



\section{Experimental Validation} 
\label{sec:experiments}

We evaluates the proposed commitment-restriction framework for accelerating network-constrained unit commitment (UC). 
The method fixes a carefully selected subset of unit–hour commitment variables prior to solving, thereby reducing the effective branch-and-bound search space. 
The underlying UC formulation, including all network constraints, ramping limits, and minimum up/down requirements, remains unchanged. 
The MILP solver therefore continues to enforce full operational feasibility and determines the optimal dispatch together with the remaining unfixed commitment decisions.

We compare the proposed strategy with representative data-driven acceleration approaches that derive commitment patterns from historical UC instances. 
Such methods typically reduce computational effort by transferring commitment structures from similar load conditions, but they may not explicitly guarantee feasibility during prediction. 
We also compare against approaches that preserve the original UC formulation while providing structured guidance to the solver, so that feasibility and optimality certification remain entirely within the MILP framework.

The numerical studies are organized to address several practical questions that arise in UC acceleration. 
First, we verify that computational improvements do not result from arbitrarily fixing commitment variables, but from selecting decisions that are consistent with system operating conditions. 
This is demonstrated by comparing the proposed restriction strategy with a random commitment-fixing benchmark under the same fixing ratio.

Second, we evaluate performance in a setting where no historical commitment schedules are available to guide the restriction process. 
This experiment examines whether the method avoids economically or operationally detrimental fixing decisions when only generator characteristics and the daily load profile are provided.

Third, we examine performance under varying demand conditions. 
In practice, commitment patterns derived from historical data may become less representative when overall demand levels or ramping requirements change. 
We therefore test whether the proposed method maintains computational effectiveness and acceptable cost performance when applied to load profiles that differ from those seen in prior instances.

Finally, we evaluate robustness under load uncertainty and over extended operating horizons. 
Specifically, we examine whether the computational benefits persist when load forecasts are perturbed and when the UC problem is solved repeatedly over long sequences of daily operations. 
Such settings introduce stronger inter-temporal interactions and increased overall computational burden, reflecting practical studies such as rolling-horizon scheduling and scenario-based planning.

\subsection{Experimental Setup}
\label{sec:setup}

The proposed approach is evaluated on UC systems across multiple scale levels, including the IEEE 57-bus system, the IEEE 73-bus RTS test system \cite{barrows2020reliability}, and the IEEE 118-bus system. For the IEEE 57-bus and IEEE 118-bus cases, the network topology (bus/branch data and electrical limits) is obtained from PGLib-OPF/MATPOWER representations \cite{pglib_uc_2019_08}. The corresponding unit commitment parameters (e.g., minimum up/down times, ramp limits, start-up/shut-down costs, and multi-segment production costs) are taken from the MATPOWER-format UC instances provided by \cite{unitcommitmentjl, matpower2011}. These sources are combined to construct network-constrained UC MILPs.

To study how UC difficulty escalates with increased commitment complexity, we construct two augmented variants on the IEEE 118-bus network by adding additional generating units, resulting in two enlarged MILP cases denoted as 118-bus+20 and 118-bus+30. These augmented systems increase the number of binary commitment variables and inter-temporal coupling constraints.
All test cases are implemented in AMPL using the identical UC formulation and solved with Gurobi under consistent parameter settings. Solver tolerances and numerical configurations are kept unchanged across all systems and all solution strategies to ensure fair comparisons.

For robustness evaluation, the IEEE 73-bus RTS system is further subjected to load perturbations. The baseline hourly demand profile is perturbed using Gaussian noise with relative magnitudes between $5\%$ and $30\%$, applied independently to each hour to generate new daily instances. Each perturbed case uses the same generator data and network configuration as the baseline. 
For the long-horizon study, daily demand profiles are derived from real historical data provided by the PGLIB benchmark. Noise is applied where necessary to construct a complete 365-day sequence of UC instances.
The solving times are recorded in all comparisons corresponding to the internal solver time returned by Gurobi.


Four solution strategies are evaluated on each system using the identical UC formulation:

\begin{itemize}
    \item \textbf{Original MILP:} The full UC formulation solved directly by Gurobi without any restriction. This serves as the reference for objective values and computational effort.
    
    \item \textbf{K-means Masking:} Historical days are clustered using K-means based on load profiles. Representative commitment patterns derived from cluster centroids are used to fix selected unit-hour variables.
    
    \item \textbf{KNN Masking:} Nearest-neighbor retrieval is performed using load similarity. Commitment patterns from the most similar historical days are transferred to define restriction sets.
    
    \item \textbf{LLM-Assisted Commitment Restriction (GPT-4o):} A large language model generates a sparse subset of unit-hour commitment decisions to be fixed prior to solving. The UC formulation itself remains unchanged, and feasibility and optimality certification are fully enforced by the MILP solver.
\end{itemize}

The data-driven masking approaches rely on historical UC instances to extract representative commitment structures. In contrast, the LLM-Assisted strategy uses generator characteristics and load information to identify commitment variables that are likely to remain stable over the scheduling horizon. In all cases, once commitment variables are fixed, the remaining UC problem is solved using the same MILP solver and formulation.

\subsection{Comparison with Random Commitment Restriction}

This subsection verifies that restricting an appropriate subset of commitment variables can significantly reduce the computational burden of unit commitment, whereas restricting an inappropriate subset can increase both the objective value and the computational burden. To make this distinction explicit, we contrast a random freezing strategy with the proposed LLM-Assisted masking approach on a representative 24-hour instance of the IEEE 73-bus RTS system, using the same freeze ratio 10\% for both methods. Table~\ref{tab:milp_random_gpt} reports the resulting simplex iterations, branch-and-bound nodes, and solving times, together with the baseline MILP solve without any freezing. 

\begin{table}[h]
\centering
\caption{Comparison among plain MILP, random freezing, and LLM-Assisted.}
\label{tab:milp_random_gpt}
\setlength{\tabcolsep}{4pt}
\begin{tabular*}{\linewidth}{@{\extracolsep{\fill}}lccc}
\toprule
Method & Simplex iters & Branch nodes & Time (s) \\
\midrule
MILP (no freezing)      & $3.61\times10^{6}$ & $3.28\times10^{5}$ & $1149.33$ \\
Random freezing ($\approx10\%$) & $1.49\times10^{7}$ & $1.19\times10^{6}$ & $3004.75$  \\
LLM-Assisted ($\approx10\%$) & ${\bm{7.07\times10^{5}}}$  & ${\bm{6.26\times10^{4}}}$  & ${\bm{177.60}}$   \\
\bottomrule   
\end{tabular*}
\end{table}

Under the random commitment-fixing strategy, the solver requires substantially more computational effort than the baseline MILP. This is reflected in higher simplex iterations, a larger number of branch-and-bound nodes, and increased wall-clock time. In addition, random fixing degrades solution quality. The total dispatch cost increases by approximately 6.1\% relative to the baseline MILP solution. This cost increase indicates that indiscriminate commitment fixing can restrict the feasible region in a way that interferes with the solver’s branch-and-bound search, even when the fixing ratio is small. 

In contrast, the proposed LLM-Assisted commitment restriction maintains solution quality while reducing computational effort. The objective deviation remains below $0.3\%$ relative to the baseline MILP solution. At the same time, simplex iterations and branch-and-bound nodes are reduced, leading to shorter solve times. These results indicate that computational improvement depends on selecting structurally appropriate commitment variables for restriction, rather than on fixing variables arbitrarily.

\subsection{Performance Without Historical Commitment Information}

The preceding results indicate that effective UC acceleration depends on restricting commitment variables in a manner consistent with system operating conditions. We next evaluate the proposed method in a setting where no historical commitment schedules are provided to guide the restriction process. 
In this experiment, the commitment-restriction strategy is generated using only generator characteristics and the test-day 24-hour load profile, which is perturbed from real system data by $10\%$. No prior commitment solutions or historical UC instances are used. 
Performance is compared against the baseline UC MILP solved directly by Gurobi. Data-driven masking methods such as KNN and K-means are not included in this experiment because they require historical commitment schedules to construct restriction patterns.

\begin{table}[h!]
\centering
\caption{Performance comparison of the baseline MILP and the LLM-Assisted commitment restriction without historical commitment information. Average objective value and solve time are reported over test instances.}
\label{tab:uc_zero_shot_scale}
\small
\begin{tabularx}{\columnwidth}{l l >{\raggedleft\arraybackslash}X >{\raggedleft\arraybackslash}X}
\toprule
System & Method & Avg Cost & Avg Time(s) \\
\midrule

\multirow{2}{*}{57-bus}
 & MILP & $1.12\times 10^{6}$ & $5.85$ \\
 & LLM  & $1.12\times 10^{6}$ & $4.74$ \\
\midrule

\multirow{2}{*}{RTS}
 & MILP    & $3.30\times 10^{6}$ & $1149.33$ \\
 & LLM     & $3.37\times 10^{6}$ & $189.12$ \\
\midrule

\multirow{2}{*}{118-bus}
 & MILP & $1.67\times 10^{7}$ & $9.06$ \\
 & LLM  & $1.70\times 10^{7}$ & $6.35$ \\
\midrule

\multirow{2}{*}{118-bus+20}
 & MILP & $1.34\times 10^{7}$ & $45.99$ \\
 & LLM  & $1.35\times 10^{7}$ & $22.26$ \\
\midrule

\multirow{2}{*}{118-bus+30}
 & MILP & $1.33\times 10^{7}$ & $823.25$ \\
 & LLM  & $1.35\times 10^{7}$ & $155.34$ \\
\bottomrule
\end{tabularx}
\end{table}

Table~\ref{tab:uc_zero_shot_scale} summarizes the performance of the proposed commitment-restriction strategy when no historical commitment schedules are available. The method reduces solve time across all tested systems, with the largest gains observed on the more computationally challenging instances. 

For the 57-bus system, solve time decreases from $5.85\,\mathrm{s}$ to $4.74\,\mathrm{s}$, corresponding to a $1.23\times$ speedup with a $0.27\%$ objective increase. On the RTS system, the baseline MILP requires $1149.33\,\mathrm{s}$ on average, while the proposed method reduces this to $189.12\,\mathrm{s}$, yielding a $6.08\times$ speedup with a $2.18\%$ objective increase. For the 118-bus family, speedups of $1.43\times$, $2.07\times$, and $5.30\times$ are observed on 118-bus, 118-bus+20, and 118-bus+30, respectively. As additional generators are introduced, the MILP solve time increases from $9.06\,\mathrm{s}$ to $45.99\,\mathrm{s}$ and further to $823.25\,\mathrm{s}$, whereas the corresponding solve times under the proposed restriction grow more moderately to $6.35\,\mathrm{s}$, $22.26\,\mathrm{s}$, and $155.34\,\mathrm{s}$.

In terms of solution quality, the restricted solutions remain close to the MILP baseline, with objective deviations ranging from $0.27\%$ to $2.18\%$. The larger deviation observed on the RTS system suggests that, in the absence of historical commitment information, restricting certain commitment variables may exclude economically preferable schedules, particularly in systems with strong inter-temporal coupling and complex commitment structure. Overall, these results indicate that meaningful computational acceleration can be achieved without historical commitment data, although objective deviations may increase on more challenging systems.

\subsection{Comparison Across System Levels and Generator Augmentation}

In addition to the comparison with random commitment fixing and the case without historical commitment information, we examine how the proposed acceleration strategy scales with system size. Specifically, performance is evaluated across multiple system levels (57-bus, RTS, and 118-bus). 
To further increase problem difficulty, we augment the 118-bus system by adding additional generating units, resulting in two expanded cases denoted as 118-bus+20 and 118-bus+30. These augmented systems increase the number of binary commitment variables and strengthen inter-temporal coupling constraints. Such increases are known to significantly enlarge the branch-and-bound search space and computational burden of solving the UC MILP.

For each system, we compare the full MILP formulation with three acceleration strategies: the proposed LLM-Assisted commitment restriction, KNN-based masking, and K-means-based masking. Test instances are constructed from real system data with a $10\%$ load perturbation to assess robustness. Table~\ref{tab:uc_avg_cost_time} reports the average objective value and the average solve time for each method.

\begin{table}[h!]
\centering
\caption{Average objective value and solve time of the compared UC solution strategies across different system sizes.}
\label{tab:uc_avg_cost_time}
\small
\begin{tabularx}{\columnwidth}{l l >{\raggedleft\arraybackslash}X >{\raggedleft\arraybackslash}X}
\toprule
System & Method & Avg Cost & Avg Time(s) \\
\midrule

\multirow{4}{*}{57-bus}
 & MILP & $1.12 \times 10^{6}$ & $5.85$ \\
 & LLM  & \bm{$1.12 \times 10^{6}$} & \bm{$3.44$} \\
 & KNN  & $1.12 \times 10^{6}$ & $3.91$ \\
 & K-means   & $1.12 \times 10^{6}$ & $3.84$ \\
\midrule

\multirow{4}{*}{RTS}
 & MILP    & $3.30 \times 10^{6}$ & $1149.33$ \\
 & LLM     & \bm{$3.30 \times 10^{6}$} & \bm{$171.59$} \\
 & KNN     & $3.31 \times 10^{6}$ & $558.34$ \\
 & K-means & $3.31 \times 10^{6}$ & $714.54$ \\
\midrule

\multirow{4}{*}{118-bus}
 & MILP & $1.68 \times 10^{7}$ & $9.06$ \\
 & LLM  & \bm{$1.69 \times 10^{7}$} & \bm{$4.97$} \\
 & KNN  & $1.69 \times 10^{7}$ & $5.72$ \\
 & K-means   & $1.69\times 10^{7}$ & $6.92$ \\
\midrule

\multirow{4}{*}{118-bus+20}
 & MILP & $1.34\times 10^{7}$ & $45.99$ \\
 & LLM  & \bm{$1.34\times 10^{7}$} & \bm{$23.49$} \\
 & KNN  & $1.34\times 10^{7}$ & $30.69$ \\
 & K-means   & $1.34\times 10^{7}$    & $44.17$ \\
\midrule

\multirow{4}{*}{118-bus+30}
 & MILP & $1.333\times 10^{7}$ & $823.25$ \\
 & LLM  & \bm{$1.335\times 10^{7}$} & \bm{$112.49$} \\
 & KNN  & $1.336\times 10^{7}$ & $130.67$ \\
 & K-means   & $1.335\times 10^{7}$ & $131.48$ \\
\bottomrule
\end{tabularx}
\end{table}

The increase in MILP solve time under generator augmentation is reflected in the underlying branch-and-bound and LP solution effort. Along the 118-bus augmentation sequence, both the number of explored branch-and-bound nodes and simplex iterations increase by approximately two orders of magnitude. Correspondingly, the baseline wall-clock time grows from $\mathcal{O}(10)\,\mathrm{s}$ to $\mathcal{O}(10^{3})\,\mathrm{s}$. 
In contrast, the proposed LLM-Assisted commitment restriction exhibits more moderate growth in solve time as system size increases, resulting in larger relative speedups on the augmented cases.

Table~\ref{tab:uc_avg_cost_time} further illustrates the scaling behavior across system levels. On the 57-bus system, the LLM-Assisted method reduces average solve time from $5.85\,\mathrm{s}$ to $3.44\,\mathrm{s}$, corresponding to a $1.70\times$ speedup. KNN and K-means reduce solve time to $3.91\,\mathrm{s}$ and $3.84\,\mathrm{s}$, respectively. Objective deviations remain small, at $0.27\%$ for the LLM-Assisted method and below $0.36\%$ for KNN and K-means.

On the RTS system, the baseline MILP requires $1149.33\,\mathrm{s}$ on average, whereas the LLM-Assisted method reduces this to $171.59\,\mathrm{s}$, yielding a $6.70\times$ speedup. KNN and K-means require $558.34\,\mathrm{s}$ and $714.54\,\mathrm{s}$, respectively, with all methods remaining within a $0.34\%$ objective deviation. 
A similar pattern is observed for the 118-bus family. The LLM-Assisted method reduces solve time from $9.06\,\mathrm{s}$ to $4.97\,\mathrm{s}$ on 118-bus and from $45.99\,\mathrm{s}$ to $23.49\,\mathrm{s}$ on 118-bus+20. On 118-bus+30, solve time decreases from $823.25\,\mathrm{s}$ to $112.49\,\mathrm{s}$, with an objective deviation of approximately $0.15\%$ relative to the MILP baseline.

\subsection{Comparison under Load Uncertainty and Long-Horizon Operations}

The uncertainty experiments are conducted on the IEEE 73-bus RTS system using a representative daily load profile that includes a significant evening peak, requiring substantial inter-temporal coordination in ramping and commitment decisions. 
To evaluate robustness under load forecast uncertainty, Gaussian perturbations are applied to the baseline hourly demand profile. Four noise levels are considered, corresponding to relative standard deviations of $10\%$, $20\%$, $30\%$, and $40\%$. For each noise level, the UC problem is solved using four approaches: the full MILP formulation, K-means-based masking, KNN-based masking, and the proposed LLM-Assisted commitment restriction.

Table~\ref{tab:acc_compare_73bus} summarizes the performance of the compared methods under the $10\%$ load perturbation scenario. All approaches produce feasible commitment schedules that satisfy power balance, minimum up/down time constraints, and ramping limits. 
Objective deviations remain small across all strategies, indicating that restricting the commitment space does not materially degrade solution quality at this noise level. 

From a computational perspective, the LLM-Assisted commitment restriction provides the largest reduction in solve time. At the $10\%$ noise level, solve time decreases by $85.11\%$, and the number of explored branch-and-bound nodes decreases by $66.41\%$ relative to the full MILP formulation. The K-means and KNN strategies also reduce computational effort, though to a lesser extent. 
The reported constraint and variable reduction percentages confirm that all masking approaches reduce the effective size of the optimization problem, with the LLM-Assisted method achieving the greatest overall reductions in this setting.

\begin{table}[h!]
\centering
\caption{Comparison of different acceleration methods on the IEEE 73 bus system.}
\label{tab:acc_compare_73bus}
\begin{tabularx}{\columnwidth}{p{30mm}cccc}
\toprule
Parameters & MILP & K-means & KNN & LLM \\
\midrule
Cost relative error (\%) & $0.00$ & $0.36$ & $<0.30$ & \bm{$<0.30$} \\
Solve time (s) & $1149.33$ & $714.54$ & $558.34$ & \bm{$171.58$} \\
Time reduction (\%) & $0.00$ & $37.83$ & $51.42$ & \bm{$85.11$} \\
Constraint reduction (\%) & $0.00$ & $33.08$ & $24.46$ & \bm{$36.73$} \\
Variable reduction (\%) & $0.00$ & $6.4$ & $12.40$ & \bm{$22.52$} \\
Node reduction (\%) & $0.00$ & $22.50$ & $37.23$ & \bm{$66.41$} \\
\bottomrule
\end{tabularx}
\end{table}

Table~\ref{tab:time_noise} reports the average and maximum solve times across the four load perturbation levels. The LLM-Assisted commitment restriction reduces solve time at all noise levels. At $10\%$ noise, the average solve time decreases from $1149.328\,$s to $171.585\,$s. At $20\%$ noise, the average solve time decreases from $803.719\,$s to $19.336\,$s, and at $30\%$ noise from $1517.875\,$s to $80.205\,$s. 
The maximum solve times exhibit similar reductions, decreasing by $91.48\%$, $96.82\%$, $93.54\%$, and $92.66\%$ across the four noise levels. These results indicate that the proposed restriction strategy maintains reduced computational effort even when load perturbations increase ramping activity and commitment adjustments.

Table~\ref{tab:cost_noise} evaluates the corresponding impact on objective value. Objective deviations remain small across all noise levels. The relative error is below $0.03\%$ at $10\%$ noise and stays within $2.99\%$, $3.80\%$, and $3.37\%$ for the $20\%$, $30\%$, and $40\%$ noise scenarios. Taken together, the results in Tables~\ref{tab:time_noise} and~\ref{tab:cost_noise} indicate that the computational reductions are achieved with limited impact on optimality under load uncertainty.

\begin{table}[h]
\centering
\caption{Time-related results of MILP and LLM-Assisted under different load noise levels on the IEEE 73-bus system.}
\label{tab:time_noise}
\begin{tabular}{llcccc}
\toprule
            & Parameters        & $10\%$ & $20\%$ & $30\%$ & $40\%$ \\
\midrule
\multirow{3}{*}{\shortstack{Ave.\\Time}}
            & Original MILP (s) & $1149.33$ & $803.72$ & $1517.87$ & $1012.36$ \\
            & LLM-Assisted (s)      & $171.59$  & $19.34$  & $80.21$    & $62.85$  \\
            & Reduced Rate (\%) & $85.11$    & $97.59$   & $94.71$     & $93.79$   \\
\midrule
\multirow{3}{*}{\shortstack{Max.\\Time}}
            & Original MILP (s) & $2678.98$ & $853.44$ & $2148.87$  & $1337.36$ \\
            & LLM-Assisted (s)      & $228.27$   & $27.17$   & $138.78$    & $98.06$    \\
            & Reduced Rate (\%) & $91.48$    & $96.82$   & $93.54$     & $92.66$    \\
\bottomrule
\end{tabular}
\end{table}

\begin{table}[h]
\centering
\caption{Cost-related results of MILP and LLM-Assisted under different load noise levels on the IEEE 73 bus system.}
\label{tab:cost_noise}
\begin{tabularx}{\columnwidth}{
l
>{\centering\arraybackslash}X
>{\centering\arraybackslash}X
>{\centering\arraybackslash}X
>{\centering\arraybackslash}X
}
\toprule
Parameters & $10\%$  & $20\%$  & $30\%$  & $40\%$  \\
\midrule
Original MILP (10$^{5}$\$)   & $33.01$ & $32.47$ & $34.17$ & $34.72$ \\
LLM-Assisted (10$^{5}$\$)        & $33.01$ & $33.44$ & $35.47$ & $35.89$ \\
Relative Error (\%)          & $<0.30$   & $2.99$  & $3.80$  & $3.37$  \\
\bottomrule
\end{tabularx}
\end{table}

As shown in Fig.~\ref{fig:uc_status_compare_day2}, black indicates the off state and white indicates the on state. 
Both panels illustrate the 24-hour commitment schedule, with each row corresponding to one generating unit. 
The LLM-Assisted schedule closely matches the MILP baseline. Base-load units remain committed throughout the day, while mid-merit and peaking units adjust their commitment during peak demand periods. Minor differences appear in a small number of marginal units with greater operational flexibility. 
The two schedule matches in $98.23\%$ of the commitment variables. This high level of agreement indicates that the proposed restriction strategy preserves the principal structural features of the UC solution while reducing computational effort.

\begin{figure}[t]
  \centering
  \begin{minipage}[t]{0.48\linewidth}
    \centering
    \includegraphics[width=\linewidth]{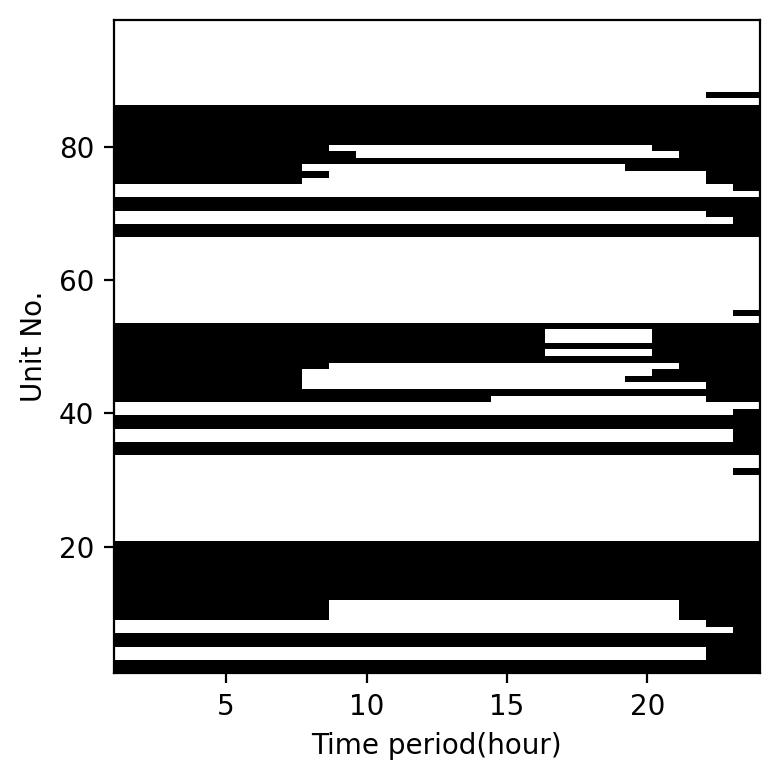}
    \small MILP
  \end{minipage}
  \hfill
  \begin{minipage}[t]{0.48\linewidth}
    \centering
    \includegraphics[width=\linewidth]{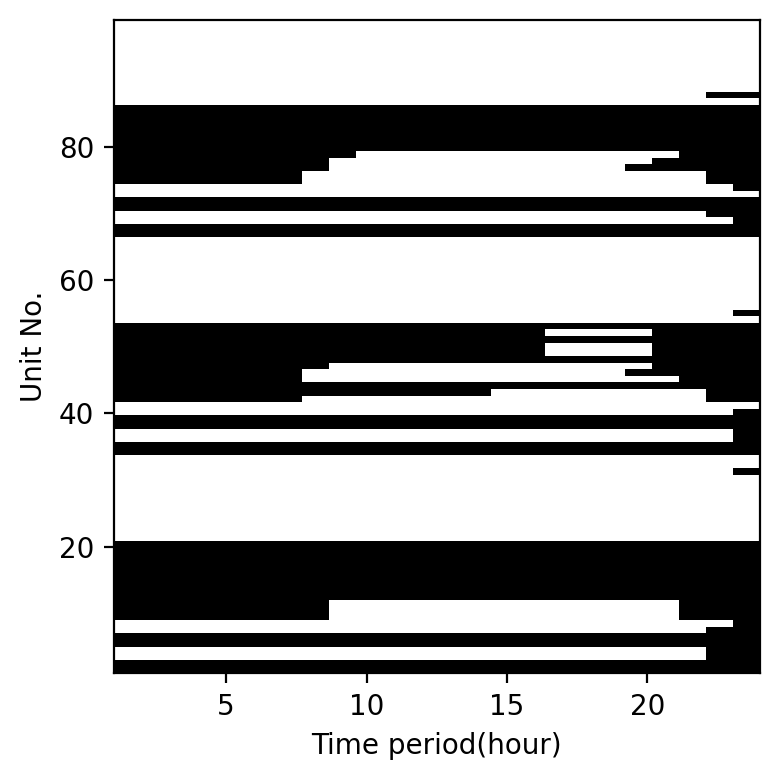}
    \small LLM-Assisted
  \end{minipage}
\caption{Comparison of 24-hour unit commitment schedules from the MILP baseline and the LLM-Assisted method. Rows correspond to generating units and columns to hours; white and black denote on and off states, respectively.}
  \label{fig:uc_status_compare_day2}
\end{figure}

Long-horizon operation requires that any acceleration strategy remain effective across sequences of daily UC problems with varying load patterns and commitment cycles. To evaluate this behavior, we apply the proposed restriction method to $365$ consecutive daily UC instances. 
The baseline MILP solver exhibits substantial day-to-day variability in solve time. Many instances exceed $1500$\,s and several exceed $2500$\,s, indicating sensitivity of the branch-and-bound process to changing load and system conditions. In comparison, the LLM-Assisted restriction method maintains solve times below $300$\,s for all $365$ instances. 
As shown in Fig.~\ref{fig:365days_time_compare}, the year-long results indicate that the proposed method reduces computational variability while preserving feasibility and maintaining objective values close to the MILP baseline. 

\begin{figure}[t]
  \centering
  \includegraphics[width=0.48\textwidth]{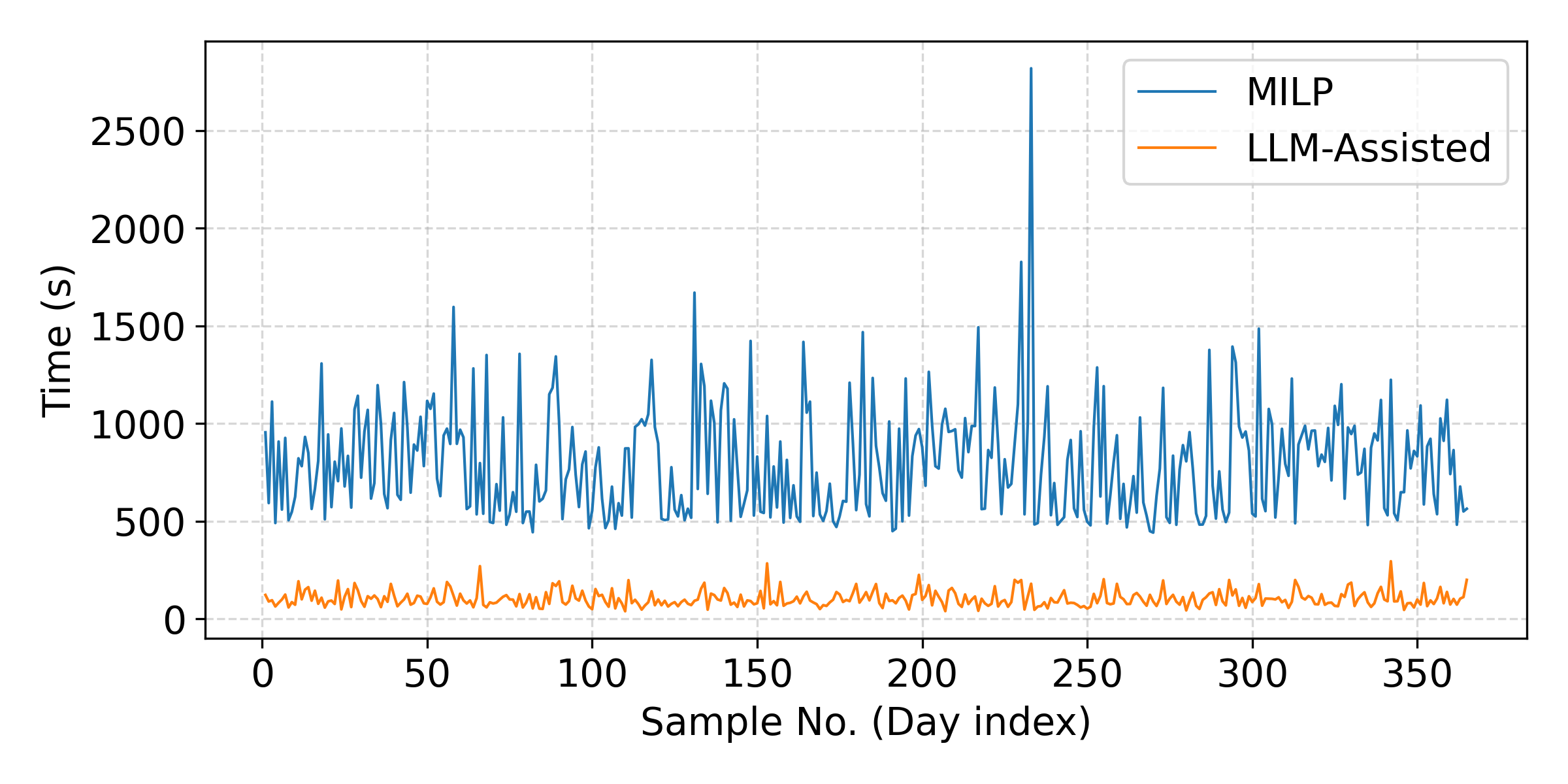}
  \vspace{-2mm}
  \caption{Solve-time comparison over 365 consecutive daily unit commitment instances. Each point corresponds to one 24-hour UC problem solved by the MILP baseline and the LLM-Assisted restriction method.}
  \label{fig:365days_time_compare}
  \vspace{-2mm}
\end{figure}

\vspace{-2mm}

\section{Conclusion} \label{sec:conclusion}

This paper presented a commitment-restriction framework for accelerating network-constrained unit commitment problems. By fixing a carefully selected subset of unit-hour commitment variables prior to solving, the proposed approach reduces the effective discrete search space without modifying the constraints. The MILP solver therefore retains full responsibility for feasibility enforcement and optimal dispatch of unfixed variables. 
Numerical studies across multiple benchmark systems demonstrated that computational gains depend on selecting structurally consistent commitment decisions. Compared with existing approaches, the proposed strategy reduces branch-and-bound effort while maintaining objective values close to the MILP baseline. The largest speedups occur in larger and more combinatorial systems, and robustness tests under load perturbations and year-long simulations indicate stable performance over extended operating conditions. 

\bibliographystyle{IEEEtran}
\bibliography{references}

\vfill

\end{document}